\documentclass[runningheads]{llncs}

\usepackage{eccvabbrv}

\usepackage{graphicx}
\usepackage{booktabs}
\usepackage{subcaption}

\usepackage[accsupp]{axessibility}  

\usepackage{hyperref}

\usepackage{orcidlink}

\usepackage{multirow} 
\usepackage{tabularx}
\usepackage{arydshln}
\usepackage{amsmath}
\usepackage{xcolor}
\usepackage{amssymb}
\usepackage{makecell}

\newcommand{\jy}[1]{\textcolor{black}{#1}}
\newcommand{\wh}[1]{\textcolor{black}{#1}}

\begin{document}

\title{Adaptive Context Matters: Towards Provable Multi-Modality Guidance for Super-Resolution} 

\titlerunning{Towards Provable Multi-Modality Guidance for Super-Resolution}

\author{Jinyi Luo \and
Minghao Liu \and
Yifan Li \and
Zejia Fan \and
Jiaying Liu
}

\authorrunning{J. Luo, M. Liu \textit{et al.}}

\institute{Wangxuan Institute of Computer Technology, Peking University}

\maketitle

\begin{abstract}
 Super-resolution (SR) is a severely ill-posed problem with inherent ambiguity, as widely recognized in both empirical and theoretical studies. 
 Although recent semantic-guided and multi-modal SR methods exploit large models or external priors to enhance semantic alignment, the fusion of heterogeneous modalities remains insufficiently understood in practice and theory. 
 In this work, we provide the first theoretical modeling of multi-modal SR, revealing that prior methods are bottlenecked by sub-optimal modality utilization. 
 Our analysis shows that the generalization risk bound can be improved by strengthening the alignment between modality weights and their effective contributions, while reducing representation complexity. 
 This theoretical insight inspires us to propose the novel \textbf{M}ulti-\textbf{M}odal \textbf{M}ixture-of-\textbf{E}xperts \textbf{S}uper-\textbf{R}esolution framework (\textbf{M$^3$ESR}) that employs generalization-oriented dynamic modality fusion for accurate risk control and modality contribution optimization. In detail, we propose a novel spatially dynamic modality weighting module and a temporally adaptive modality temperature scheduling mechanism, enabling flexible and adaptive spatial-temporal modality weighting for effective risk control. Extensive experiments demonstrate that our M$^3$ESR significantly boosts generalization and semantic consistency performances, which confirms our superiority.
  \keywords{
  Multi-Modality Guidance \and Super-Resolution \and Dynamic Fusion \and Uncertainty \and Mixture-of-Experts }
\end{abstract}

\section{Introduction}
\label{sec:intro}

Single image super-resolution (SISR) seeks to reconstruct high-resolution \jy{(HR)} images from \jy{low-resolution (LR)} observations obtained under complex real-world imaging processes, in which downsampling is intertwined with degradations including blur, noise, compression artifacts, \textit{etc}.
In recent years, substantial progress has been achieved in this field, evolving from traditional reconstruction-based methods~\cite{Kim2010single,Xiong2010Robust,Freedman2011ImageAV} and early deep learning-based approaches~\cite{srcnn,fsrcnn,swinir,ipt} to recent techniques that exploit generative model priors~\cite{realesrgan,resshift,diffbir,stablesr}, reflecting a continuing evolution toward improved perceptual quality.
%
Nevertheless, due to the ill-posed nature of super-resolution, methods relying solely on low-resolution inputs often face severe ambiguity, leading to reconstructions that may deviate from plausible semantic structures.

\begin{figure}[t]
    \centering
    \includegraphics[width=\linewidth]{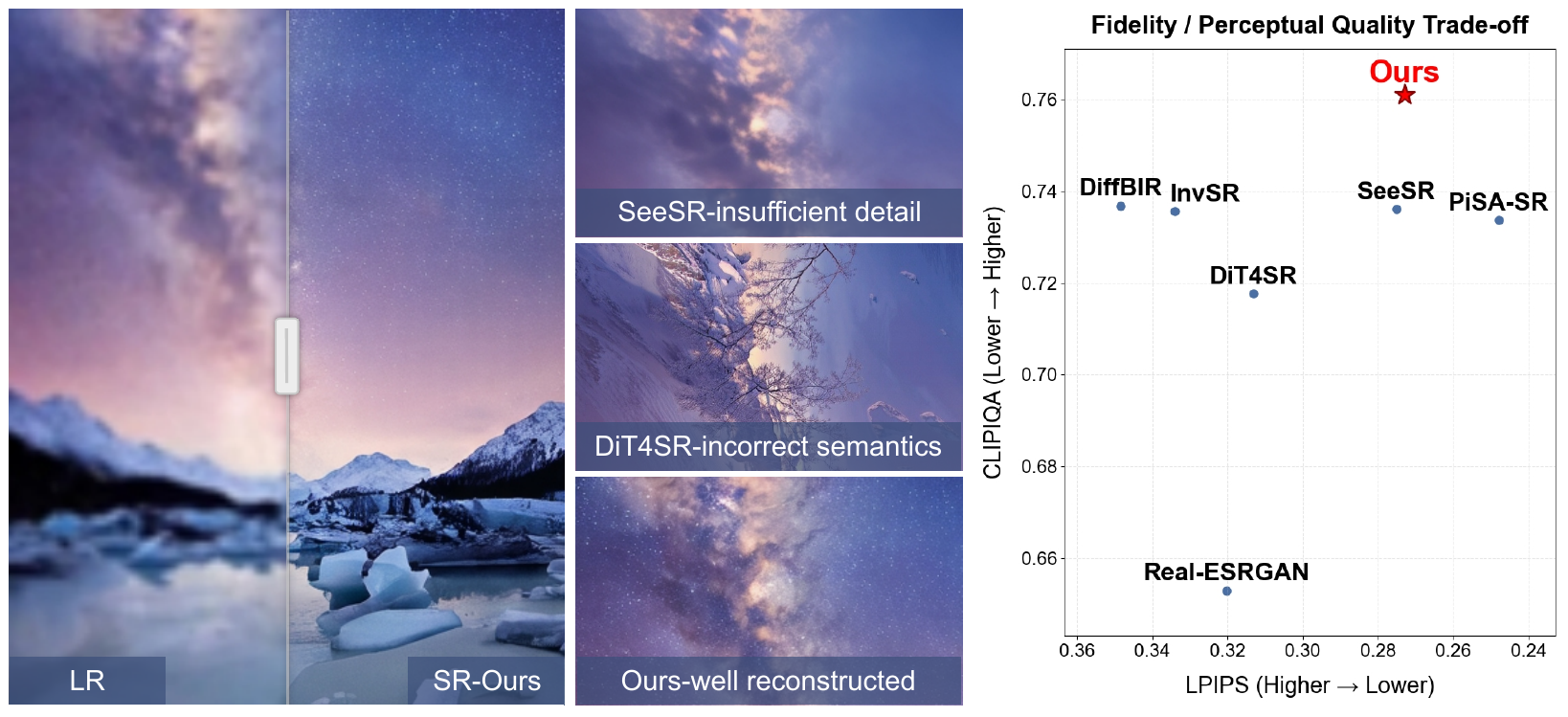}
    \caption{
    Example qualitative and quantitative comparisons that demonstrate the superior semantic fidelity and perceptual quality achieved by our proposed M$^3$ESR method. While existing SR methods either struggle to generate sufficient details or introduce semantic inconsistencies, our method effectively addresses both through the incorporation of multi-modal guidance signals.
    }
    \label{fig:teaser}
    \vspace{-5mm}
\end{figure}

To tackle this issue, recent works~\cite{seesr,tvt,dit4sr} introduce text guidance with complementary semantic information into SR. They incorporate pretrained visual encoders and text-to-image generative models with sources of textual semantic guidance, thereby enhancing perceptual quality through improved alignment with high-level semantics.
However, due to the abstract and global (non-pixelwise) nature of textual signals, semantic-guided SR methods remain limited in providing fine-grained semantic cues, often leading to misalignment between guided semantics and local image regions.
%

A recent study on semantic-guided super-resolution, MMSR~\cite{mmsr}, explores the integration of multi-modal guidance for SR reconstruction. It incorporates multiple complementary modalities derived from the low-resolution input, including semantic segmentation, edges, and depth, within a unified feature encoding framework. This design enables more fine-grained and semantically consistent guidance, leading to improved reconstruction quality.
%
However, while improving semantic perceptual performance, this strategy may introduce potential generalization risks. Under real-world degradations and content variations, semantic errors in extracted modality cues can be propagated through entangled multi-modal representations, resulting in unreliable reconstructions.


The fundamental issue lies in the lack of explicit regulation of modality-related risks in multi-modal super-resolution. Existing multi-modal guidance-based SR methods extract multiple modality cues from low-resolution inputs and fuse them into a unified representation, implicitly assuming that the guidance signals are reliable and complementary, while overlooking semantic errors from LR information loss and modality redundancy.
Our theoretical analysis reveals that, without explicit risk regulation, risky modality cues can be entangled with LR features, leading to increased generalization risk and unreliable reconstruction.

In detail, we reformulate multi-modal-guided super-resolution in the framework of generalization risk minimization. 
%
We derive an upper bound on the generalization risk of multi-modal-guided super-resolution, consisting of a static fusion baseline, training data size and two high-level components:
\textbf{Modality Contribution-Correlation}, which characterizes the alignment between the weights assigned to each modality and the marginal performance gain contributed by that modality; and
\textbf{Fusion Complexity}, which captures the change in the model’s Rademacher complexity relative to the static fusion baseline.
%
We show that, under a Mixture-of-Experts (MoE) framework, positively correlating modality weights with their marginal contributions tightens the generalization bound, enabling improved generalization via dynamic fusion.

Inspired by this insight, we propose a \textbf{Multi-Modal Mixture-of-Experts Super-Resolution} framework (\textbf{M$^3$ESR}) that employs spatial–temporal modality risk estimation and dynamic weighting for compact multi-modal aggregation.
Specifically, we leverage the uncertainty map as a region-wise modality risk indicator and learn an uncertainty-based router to assign patch-specific weights to different experts, enabling spatially dynamic, gain-aware modality control.
%
Meanwhile, we introduce a timestep-aware modality temperature scheduler that adaptively adjusts each expert’s attention temperature across diffusion steps, enabling stage-dependent modulation of modality importance.
Together, our framework dynamically selects and adjusts the modality guidance signal with the highest marginal benefit at each spatial region and diffusion timestep.

Our contributions can be summarized as follows:
\begin{itemize}
    \item We propose a generalization-oriented theory for multi-modal SR, showing that generalization risk is governed by modality weight-contribution correlation and representation complexity, which in turn inspires a compact multi-modal guidance extraction-injection design.
    \item We design a risk-aware spatial dynamic modality weighting mechanism that uses uncertainty maps to capture region-wise modality contributions and adaptively modulate multi-modal weights for generalization risk reduction.
    \item We introduce a timestep-aware modality scheduling mechanism that learns timestep-based temperature schedules to adaptively regulate modality expert attention. It effectively handles the time-varying reliance on different semantic modalities during the SR process.
\end{itemize}

\section{Related Work}
\subsection{Semantic-guided Image Super-resolution}
Early deep-learning-based single-image super-resolution methods utilize convolutional neural network (CNNs)~\cite{srcnn,fsrcnn,msrn} and Transformer-based architectures~\cite{swinir,ipt,pftsr} for global and regional information extraction to fit LR-HR pairs generated through simple bicubic downsample degradation, and struggle to handle real complex degradation patterns in real-world scenarios. 
More recent methods leverage the priors from generative models on SR task to better model real-world degradation, such as BSRGAN~\cite{bsrgan}, RealESRGAN~\cite{realesrgan}, Reshift~\cite{resshift}, StableSR~\cite{stablesr}, and DiffBIR~\cite{diffbir}.
However, due to the ill-posed nature of the SISR task, reconstruction solely based on LR inputs often suffers from semantic ambiguity caused by information loss, leading to results that differ from human semantic perception. 

To address this issue, a group of works~\cite{seesr,dit4sr,tvt,pasd} incorporate semantic-level guidance signals into the SR process, such as leveraging text-prompts or semantic-related neural information. 
PromptSR~\cite{promptsr} perceives degradation-related semantic guidance using a pretrained multi-modal large language model. 
SeeSR~\cite{seesr} builds a degradation-aware prompt extractor to extract text-style hard and embedding-style soft semantic prompts for SR guidance. 
PiSA-SR~\cite{pisasr} disentangles semantic-level detail generation from pixel-level reconstruction with a separate LoRA module.
PURE~\cite{pure} proposes a mixed high-level understanding-low-level restoration scheme in the same auto-regressive generation process.
DiT4SR~\cite{dit4sr} leverages the attention space of Multi-Modal Diffusion Transformers (MMDiT) for more effective text semantic information injection.
Although these methods successfully improve human-perceived semantic quality by incorporating global semantic guidance, such guidance is mainly provided in the form of global textual embeddings. As a result, mismatches may occur between specific semantic instances and image regions, limiting the effectiveness of fine-grained semantic guidance.

\subsection{Multi-modal Signal-guided Image Super-resolution}

To provide spatial guidance, many recent super-resolution methods have attempted to incorporate spatially related modalities beyond textual prompts.
SegSR~\cite{segsr} utilizes segmentation maps to help identify and better restore salient semantic components in the image.
%
RAGSR~\cite{ragsr} performs bounding box detection and aligns text with image regions via a regional attention mechanism. However, these methods primarily focus on spatial semantic alignment and do not further explore the role of diverse modality guidance in super-resolution.
UPSR~\cite{upsr} employs uncertainty maps to regulate the noise level in the diffusion process, thereby controlling super-resolution behavior across different texture regions. However, it does not incorporate semantic information, nor does it explore effective coordination between semantic cues and texture reconstruction.
MMSR~\cite{mmsr} incorporates additional modality signals, including semantic segmentation, edge, and depth maps, through a multi-modal latent connector. Nevertheless, it assumes uniformly positive contributions from all modalities, lacking sample-specific modality regulation to suppress modality-induced risks dynamically. 
%
In this work, we explore multi-modal semantic guidance for super-resolution and introduce dynamic mechanisms to reduce modality-induced risks and enhance modality-specific contributions.

\section{Generalization-Guided Risk-Aware Multi-Modal SR }


In this section, we conduct a theoretical analysis on generalization risk reduction, which reveals its dependence on marginal modality weight correlation and modal representation complexity, and then design a compact representation injection framework with dynamic weighting and scheduling mechanism based on these insights.

\subsection{Multi-Modal SR Generalization Bound Minimization}

Our multi-modal-guided SISR problem can be formally defined as follows. Given an input LR image $x \in \mathcal{X} = \mathbb{R}^{H \times W \times C}$ and $M$ guidance signals from different modalities $\mathbf{m} = \{\textbf{m}_1, \ldots, \textbf{m}_M\}$, where each modality feature is represented as $\textbf{m}_i \in \mathbb{R}^{N_i \times d}$, our goal is to learn an SR model $f$ that minimizes the expected reconstruction loss between the predicted SR image $f(x,\mathbf{m})$ and the target HR image $y \in \mathcal{Y} = \mathbb{R}^{(sH) \times (sW) \times C}$:
\begin{equation}
\operatorname*{arg\,min}_{f} \;  \; \mathbb{E}_{(x,y)\sim\mathcal{D}}
\big[ \ell\big(f(x,\mathbf{m}), y\big) \big],
\end{equation}
where $\mathcal{D}$ denotes the unknown real-world distribution over $\mathcal{X} \times \mathcal{Y}$, 
$\ell: \mathcal{Y} \times \mathcal{Y} \rightarrow \mathbb{R}_+$
is the SR loss function, which is assumed to be Lipschitz continuous
with respect to its first argument (\jy{\emph{e.g.}}, the $L_2$ loss), $s$ denotes the super-resolution scaling factor,
\jy{$H$ and $W$ are image height and width}, $C$ is the number of image channels,
$N_i$ denotes the number of tokens in the $i$-th modality,
and $d$ is the unified feature dimension of the multi-modal guidance.

To instantiate the multi-modal-guided SR model $f$, we build a Mixture-of-Experts framework with intermediate feature fusion design, which consists of an MoE guidance feature fusion layer $h_{\text{MoE}}: \mathcal{X} \times \mathcal{M} \to \mathbb{R}^{N_p \times d_h}$ and corresponding post-process layers $g: \mathbb{R}^{N_p \times d_h} \to \mathcal{Y}$. Here $\mathcal{M}$ denotes the space of multi-modal guidance signals, $N_p$ is the number of patches in the patchified fused feature and $d_h$ is the hidden dimension of the fused feature. Thus, the model can be built as $f(x, \mathbf{m}) = g \circ h_{\text{MoE}}(x, \mathbf{m})$ with:
\begin{equation}
    h_{\text{MoE}}(x, \mathbf{m}) = z_x + \sum_{m=1}^{M} \sum_{i=1}^{N_p} w^m_i \cdot E_m(z_{x,i}, \textbf{m}_m),
\end{equation}
where $z_x = \text{VAE-Enc}(x) \in \mathbb{R}^{N_p \times d_h}$ is the patch embeddings of the LR input $x$, $z_{x,i} \in \mathbb{R}^{d_h}$ refers to the embedding of the $i^{th}$ patch, \jy{$\circ$ denotes function compostion,} $E_m: \mathbb{R}^{d_h} \times \mathbb{R}^{L_m \times d} \to \mathbb{R}^{d_h}$ is the $m^{th}$ expert, and $w_i^m$ is the weight assigned to the output on the $i^{th}$ patch from the $m^{th}$ expert.

\wh{In} real-world scenarios, the degradation of LR inputs is \wh{sample-dependent and highly diverse}, which leads to \wh{sample-dependent usefulness} of modality guidance. Specifically, the \wh{effectiveness} of modality guidance may vary significantly across different samples, exhibiting \wh{different levels of relevance} and suffering from modality information errors as well as modality-LR region mismatches.
Under a static fusion framework, modality weights are treated as constants for each modality across all samples and spatial regions, \wh{lacking the ability to adapt to sample-wise variations in modality guidance quality}. Consequently, \wh{as shown in Fig.~\ref{fig:wrong_guidance}, when modality guidance is unreliable}, misleading modality information can be introduced into the super-resolution results, \wh{potentially causing significant degradation and} limiting generalization capability across scenarios.

\begin{figure}[t]
    \centering
    \includegraphics[width=0.8\linewidth]{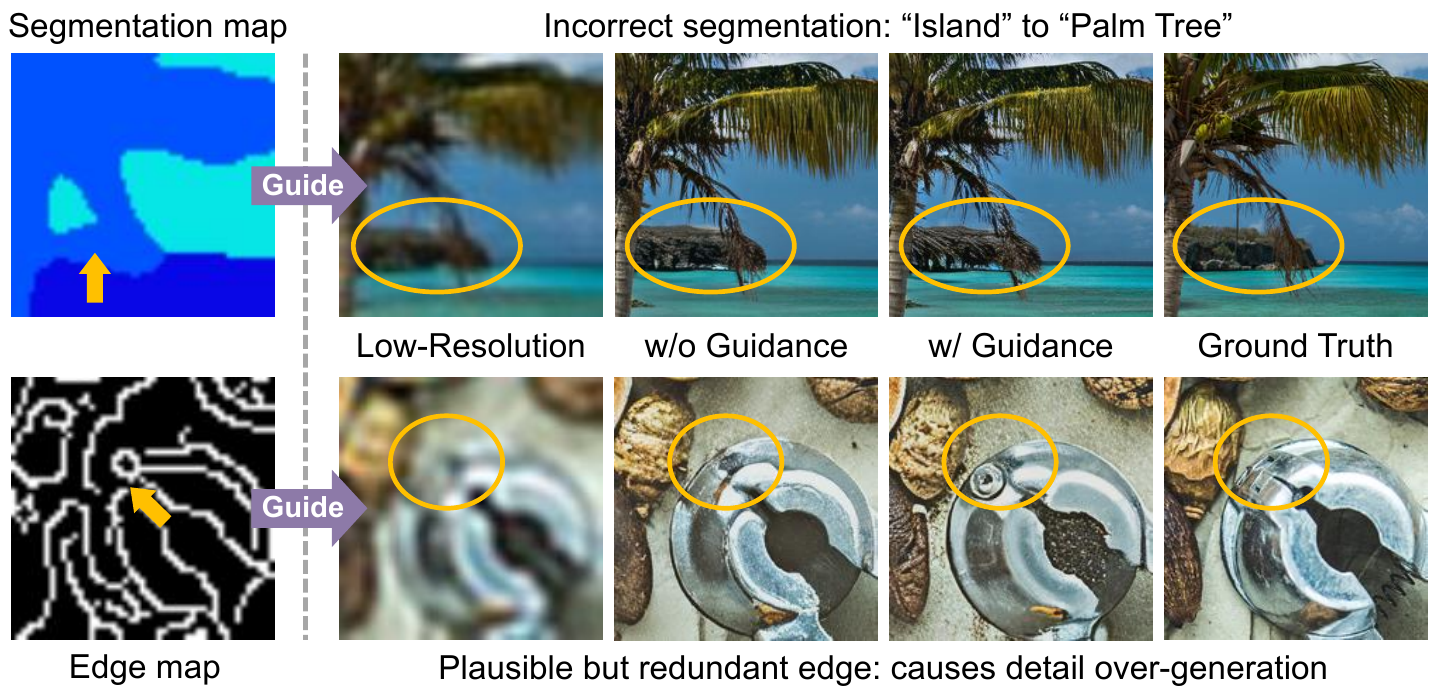}
    \vspace{-2mm}
    \caption{\textbf{SR reconstruction results are degraded by inaccurate guidance signals}.
Under generalized LR degradations, modality guidance estimation inevitably suffers from errors (\jy{\emph{e.g.}}, misclassified regions in segmentation maps) and redundancy (\jy{\emph{e.g.}}, uninformative regions in depth maps). Such inaccurate guidance, when fused in a static manner, significantly degrades reconstruction performance.}
    \label{fig:wrong_guidance}
    \vspace{-6mm}
\end{figure}


To investigate this problem, we consider the generalization error of a multi-modal SR model on the real-world degradation domain as $\mathbb{E}_\mathcal{D}[\ell(f(x), y)]$. According to Rademacher Complexity Theory~\cite{rademacher}, the following expansion holds with probability at least $1-\delta$:
\begin{equation}
\label{eq:radmacher_expansion_f}
    \mathbb{E}_{\mathcal{D}}[\ell(f(x), y)] \leq \hat{\mathbb{E}}_{\mathcal{S}}[\ell(f(x), y)] + 2L\mathcal{R}_N(\mathcal{F}) + \sqrt{\frac{\ln(1/\delta)}{2N}},
\end{equation}
where $\hat{\mathbb{E}}_{\mathcal{S}}$ is the empirical expectation, $\mathcal{S}$ is the training data domain, $\mathcal{R}_N(\mathcal{F})$ is the Rademacher complexity, $\mathcal{F}$ is the function space of SR models, $L$ is a constant, and $N$ is the training data size. We compare the generalization bound of dynamic fusion with static fusion $f_{\text{static}}$. 
%
%
We present a main theorem showing that the expected generalization loss of dynamic fusion models is no greater than static fusion, with the improvement governed by the correlation between expert weights and marginal modality gains as well as the increase in model complexity.

\begin{theorem}[Generalization Advantage of MoE Dynamic Fusion]
Let $f_{\text{MoE}}: \mathcal{X} \times \mathcal{M} \to \mathcal{Y}$ be the multi-modal SR model based on MoE with  dynamic fusion design, $f_{\text{static}}: \mathcal{X} \times \mathcal{M} \to \mathcal{Y}$ be the static fusion baseline, then for $\forall \delta \in (0,1) $, the following generalization bound holds with probability at least $1-\delta$:
\begin{equation}
\boxed{
\begin{aligned}
\sup_{f\sim \mathcal{F}_{\text{MoE}}}
\mathbb{E}_{\mathcal{D}}
\bigl[\ell(f(x,\mathbf{m}), y)\bigr]
\;\le\;
&\sup_{f\sim \mathcal{F}_{\text{static}}}
\mathbb{E}_{\mathcal{D}}
\bigl[\ell(f(x,\mathbf{m}), y)\bigr] \\
&- \Gamma_{\text{MoE}}
+ \Delta_\mathcal{R}(\mathcal{F}_{\text{MoE}},\mathcal{F}_{\text{static}})
+ \mathcal{O}\left(\frac{1}{\sqrt{N}}\right).
\end{aligned}
}
\end{equation}
\end{theorem}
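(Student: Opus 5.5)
We argue by applying the Rademacher expansion \eqref{eq:radmacher_expansion_f} separately to the two hypothesis classes and then comparing their empirical terms. \textbf{Step one:} instantiate \eqref{eq:radmacher_expansion_f} for every $f\in\mathcal{F}_{\text{MoE}}$, with confidence $\delta/2$. Take the supremum over the class, so the left-hand side of the boxed inequality is at most
\[
\sup_{f\in\mathcal{F}_{\text{MoE}}}\hat{\mathbb{E}}_{\mathcal{S}}[\ell(f(x,\mathbf{m}),y)] + 2L\mathcal{R}_N(\mathcal{F}_{\text{MoE}}) + \sqrt{\ln(2/\delta)/(2N)} .
\]
For the static class we need the reverse direction. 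By the standard two-sided uniform deviation bound (McDiarmid plus symmetrization), with probability at least $1-\delta/2$ and uniformly over $f\in\mathcal{F}_{\text{static}}$,
\[
\hat{\mathbb{E}}_{\mathcal{S}}[\ell] \le \mathbb{E}_{\mathcal{D}}[\ell] + 2L\mathcal{R}_N(\mathcal{F}_{\text{static}}) + \sqrt{\ln(2/\delta)/(2N)} .
\]
A union bound makes both statements hold simultaneously with probability $1-\delta$. Subtracting them leaves two kinds of terms. The complexity terms combine into $\Delta_\mathcal{R}:=2L\bigl(\mathcal{R}_N(\mathcal{F}_{\text{MoE}})-\mathcal{R}_N(\mathcal{F}_{\text{static}})\bigr)$, possibly up to an extra static-complexity contribution that we absorb into the definition of $\Delta_\mathcal{R}$. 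The concentration terms combine into $\mathcal{O}(1/\sqrt N)$.

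\textbf{Step two:} it remains to show that the MoE empirical risk is smaller than the static one by $\Gamma_{\text{MoE}}$. We use the nesting $\mathcal{F}_{\text{static}}\subseteq\mathcal{F}_{\text{MoE}}$: constant weights $w_i^m\equiv\bar w^m$ are a special case of the router. We then compare a dynamic-weight model to its static counterpart with the same experts and with weights $\bar w^m=\mathbb{E}[w_i^m]$.

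Define the marginal gain of modality $m$ on patch $i$ of a sample as $G_i^m := -\partial_{w_i^m}\ell$, evaluated along the fusion direction. Concretely, this is the first-order decrease in loss from adding $E_m(z_{x,i},\mathbf{m}_m)$ through $g$. Since $\ell$ is Lipschitz (and we assume $g$ is smooth), a first-order expansion in the weights around $\bar w$ gives
\[
\hat{\mathbb{E}}[\ell(f_{\text{MoE}})]\approx\hat{\mathbb{E}}[\ell(f_{\text{static}})]-\sum_{m,i}\hat{\mathbb{E}}\bigl[(w_i^m-\bar w^m)G_i^m\bigr].
\]
The subtracted sum equals $\sum_{m,i}\widehat{\mathrm{Cov}}(w_i^m,G_i^m)$, and we define this quantity to be $\Gamma_{\text{MoE}}$, the modality contribution--correlation term. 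The second-order remainder would be controlled by a smoothness constant times $\hat{\mathbb{E}}\|w-\bar w\|^2$ and folded into $\Gamma_{\text{MoE}}$ as a penalty.

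Finally, choosing $f_{\text{static}}$ to be the (near-)maximizer in the static supremum and its dynamic extension gives the boxed inequality.

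The main obstacle is Step two. The statement compares two suprema, whereas a correlation argument naturally compares specific paired models, and the first-order expansion needs a smoothness assumption on $g\circ h_{\text{MoE}}$ beyond Lipschitzness of $\ell$. The first thing to try is to define $\Gamma_{\text{MoE}}$ directly as the gap between the worst-case empirical risks of the two classes, lower-bounded by the covariance expression above. This makes the inequality exact and recovers the interpretation that positive weight--gain correlation is what tightens the bound.
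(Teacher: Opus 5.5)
\textbf{Verdict: genuine gap.} Your Step two is the paper's own mechanism, but the final passage to the suprema fails. The paper also Taylor-expands the loss around the static fused feature. It defines $\Delta_i^m=-\langle\nabla_z\phi(z_{\text{static}}),E_m(z_{x,i},\mathbf{m}_m)\rangle$, which is your $G_i^m$. It imposes an ``unbiasedness constraint'' $\hat{\mathbb{E}}_{\mathcal{S}}[w_i^m]=\bar w^m$, which is your choice of $\bar w^m$, so only $\sum_{m,i}\mathrm{Cov}(w_i^m,\Delta_i^m)$ survives.

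The gap is your last step, from this paired comparison to the suprema. Taking the static near-maximizer and its dynamic extension controls the risk of that one dynamic model. The left-hand side, however, is $\sup_{f\in\mathcal{F}_{\text{MoE}}}$, so you would need every $f\in\mathcal{F}_{\text{MoE}}$ to beat some static model by a uniform margin $\Gamma_{\text{MoE}}$. Under the nesting $\mathcal{F}_{\text{static}}\subseteq\mathcal{F}_{\text{MoE}}$ that you invoke, this is impossible for $\Gamma_{\text{MoE}}>0$. Every static model is itself a member of $\mathcal{F}_{\text{MoE}}$ with zero covariance, and $\sup_{\mathcal{F}_{\text{MoE}}}\mathbb{E}_{\mathcal{D}}[\ell]\ge\sup_{\mathcal{F}_{\text{static}}}\mathbb{E}_{\mathcal{D}}[\ell]$ trivially. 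Hence the boxed inequality can only hold when $\Gamma_{\text{MoE}}\le\Delta_{\mathcal{R}}+\mathcal{O}(N^{-1/2})$.

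Your fallback fails for the same reason. Under nesting, $\sup_{\mathcal{F}_{\text{static}}}\hat{\mathbb{E}}_{\mathcal{S}}[\ell]-\sup_{\mathcal{F}_{\text{MoE}}}\hat{\mathbb{E}}_{\mathcal{S}}[\ell]\le 0$, so this gap cannot be lower-bounded by a positive covariance. Defining $\Gamma_{\text{MoE}}$ as that gap gives a true but vacuous inequality with no weight--gain content. The paper does not resolve this either. Its proof ends with the paired inequality $\mathbb{E}_{\mathcal{D}}[\ell(f_{\text{MoE}})]-\mathbb{E}_{\mathcal{D}}[\ell(f_{\text{static}})]\le-\Gamma_{\text{MoE}}+\Delta_{\mathcal{R}}+\mathcal{O}(N^{-1/2})$ for one specific pair, with $\bar w^m$ tied to the router of $f_{\text{MoE}}$, and then declares the theorem proved. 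That paired statement is what your argument actually supports, so state and prove it rather than the version with suprema.

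There are further points where you are more careful than the paper but still do not reach the stated form.
\begin{itemize}
\item \textbf{Complexity term.} The paper gets the difference $2L(\mathcal{R}_N(\mathcal{F}_{\text{MoE}})-\mathcal{R}_N(\mathcal{F}_{\text{static}}))$ by subtracting two one-sided Rademacher upper bounds, which is not a valid inference. Your two-sided bound with a union bound is the correct route. It produces $2L(\mathcal{R}_N(\mathcal{F}_{\text{MoE}})+\mathcal{R}_N(\mathcal{F}_{\text{static}}))=\Delta_{\mathcal{R}}+4L\mathcal{R}_N(\mathcal{F}_{\text{static}})$. The extra term cannot be ``absorbed into the definition of $\Delta_{\mathcal{R}}$'' without changing the theorem. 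It fits in the $\mathcal{O}(N^{-1/2})$ slack only under an added hypothesis such as $\mathcal{R}_N(\mathcal{F}_{\text{static}})=\mathcal{O}(N^{-1/2})$.
\item \textbf{Second-order remainder.} The remainder does not scale with $N$, so it cannot hide in the $\mathcal{O}(N^{-1/2})$ term. The paper bounds it by $C\,M\,N_p=O(1)$ and then treats it as $o(N^{-1/2})$ without justification. Keeping it as a penalty proportional to $\hat{\mathbb{E}}\|w-\bar w\|^2$, as you propose, is the honest fix. The result then holds with $\Gamma_{\text{MoE}}$ replaced by the covariance sum minus that penalty, and it needs a second-order smoothness assumption on $\ell\circ g$.
\item \textbf{Patch-independent static weights.} Since static weights do not depend on the patch, setting $\bar w^m=\hat{\mathbb{E}}[w_i^m]$ presupposes that the router's mean weight is the same on every patch $i$. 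This is the paper's unstated unbiasedness assumption and should be listed as a hypothesis.
\end{itemize}
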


Here, \( \mathcal{F}_{\text{MoE}} \) and \( \mathcal{F}_{\text{static}} \) denote the function
spaces corresponding to the dynamic MoE model and the static fusion baseline,
respectively. The terms \( \Gamma_{\text{MoE}} \) and
\( \Delta_{\mathcal{R}}\!\left(\mathcal{F}_{\text{MoE}}, \mathcal{F}_{\text{static}}\right) \)
represent the covariance between the marginal gain induced by MoE and the assigned modality weights, and the additional model
complexity introduced by MoE, respectively, which are defined as:
\begin{align}
    & \Gamma_{\text{MoE}} := \sum_{m=1}^{M} \sum_{i=1}^{N_p} \text{Cov}\left(w^m_i, \Delta^m_i(x, y)\right), \\
    & \Delta_\mathcal{R}(\mathcal{F}_{\text{MoE}},\mathcal{F}_{\text{static}}) := 2L(\mathcal{R}_N(\mathcal{F}_{MoE})-\mathcal{R}_N(\mathcal{F}_{static})).
\end{align}

Consequently, under identical settings, the generalization bound of dynamic fusion MoE can be optimized by maximizing \( \Gamma_{\text{MoE}} \) while reducing the model complexity.

We provide a brief proof of the proposed theorem on the generalization advantage of dynamic MoE fusion below. 

\begin{proof}

Based on the Rademacher expansions in Ineq.~\ref{eq:radmacher_expansion_f}, 
the difference between the upper bounds of the expected generalization error of dynamic MoE and static models is given by:
\begin{equation}
\label{eq:radmacher_expansion}
\begin{aligned}
    \mathbb{E}_{\mathcal{D}}
\bigl[\phi(f_{\text{MoE}}) - \phi(f_{\text{static}})\bigr]
 &\leq  \;\hat{\mathbb{E}}_{\mathcal{S}}
\bigl[\phi(f_{\text{MoE}}) - \phi(f_{\text{static}})\bigr] \\
&+ \;\Delta_\mathcal{R}(\mathcal{F}_{\text{MoE}},\mathcal{F}_{\text{static}}) + \mathcal{O}\left(\frac{1}{\sqrt{N}}\right).
\end{aligned}
\end{equation}

 We define the marginal contribution of modality $m$ on the $i^{th}$ patch as 
$$\Delta^m_i = - \langle \nabla_z \phi(z_{\text{static}}), E_m(z_{x,i}, \textbf{m}_m) \rangle.$$ \jy{Let $\bar{w}^m$ denotes the sample-invariant expert weight for the $m$-th modality in static fusion models,} the loss difference expectation can be written as:
\begin{equation}
\begin{aligned}
&\hat{\mathbb{E}}_{\mathcal{S}}[\ell(f_{\text{MoE}}(x, \mathbf{m}), y)] - \hat{\mathbb{E}}_{\mathcal{S}}[\ell(f_{\text{static}}(x, \mathbf{m}), y)] \\
=&  - \hat{\mathbb{E}}_{\mathcal{S}}\left[\sum_{m=1}^{M} \sum_{i=1}^{N_p} (w^m_i - \bar{w}^m) \cdot \Delta^m_i(x,y)\right] + \hat{\mathbb{E}}_{\mathcal{S}}[R(x,y)], \label{eq:loss_expectation_expansion}
\end{aligned}
\end{equation}
where \( R(x,y) := \frac{1}{2} \langle \Delta z, H_\phi(z_{\text{static}}) \Delta z \rangle  \) is a sample-dependent higher-order remainder term, and we can guarantee that $|\mathbb{E}[R(x,y)]| \leq C \cdot M \cdot N_p = O(1)$ holds on the training set.
%
%
For the marginal contribution term, we have:
\begin{equation}
\begin{aligned}
\mathbb{E}[(w^m_i - \bar{w}^m) \Delta^m_i(x,y)] 
= (\mathbb{E}[w^m_i] - \bar{w}^m) \mathbb{E}[\Delta^m_i] + \text{Cov}(w^m_i, \Delta^m_i).
\end{aligned}
\end{equation}

Apply the unbiasedness constraint of modality weights over $\mathcal{S}$, we have:
\begin{equation}
    \mathbb{E}[(w^m_i - \bar{w}^m) \Delta^m_i] = \text{Cov}(w^m_i, \Delta^m_i).
\end{equation}

Thus, Eq.~\ref{eq:loss_expectation_expansion} is simplified as:
\begin{equation}
\begin{aligned}
   \hat{\mathbb{E}}_{\mathcal{S}}[\ell(f_{\text{MoE}}, y)] -  \hat{\mathbb{E}}_{\mathcal{S}}[\ell(f_{\text{static}}, y)] 
    =  - \sum_{m=1}^{M} \sum_{i=1}^{N_p} \text{Cov}(w^m_i, \Delta^m_i) + o(N^{-1/2}).
\end{aligned}
\end{equation}

Substituting the above expression into Ineq.~\ref{eq:radmacher_expansion} gives:
\begin{equation}
    \begin{aligned}
        &{\mathbb{E}}_{\mathcal{D}}[\ell(f_{\text{MoE}}, y)] -  {\mathbb{E}}_{\mathcal{D}}[\ell(f_{\text{static}}, y)] \\
        \leq & - \Gamma_{\text{MoE}} + \Delta_\mathcal{R}(\mathcal{F}_{\text{MoE}},\mathcal{F}_{\text{static}}) + \mathcal{O}\left(\frac{1}{\sqrt{N}}\right).
    \end{aligned}
\end{equation}

Thus, the Dynamic Fusion MoE Generalization Advantage Theorem is proved.
\end{proof}

\begin{figure}[t]
    \centering
    \includegraphics[width=\linewidth]{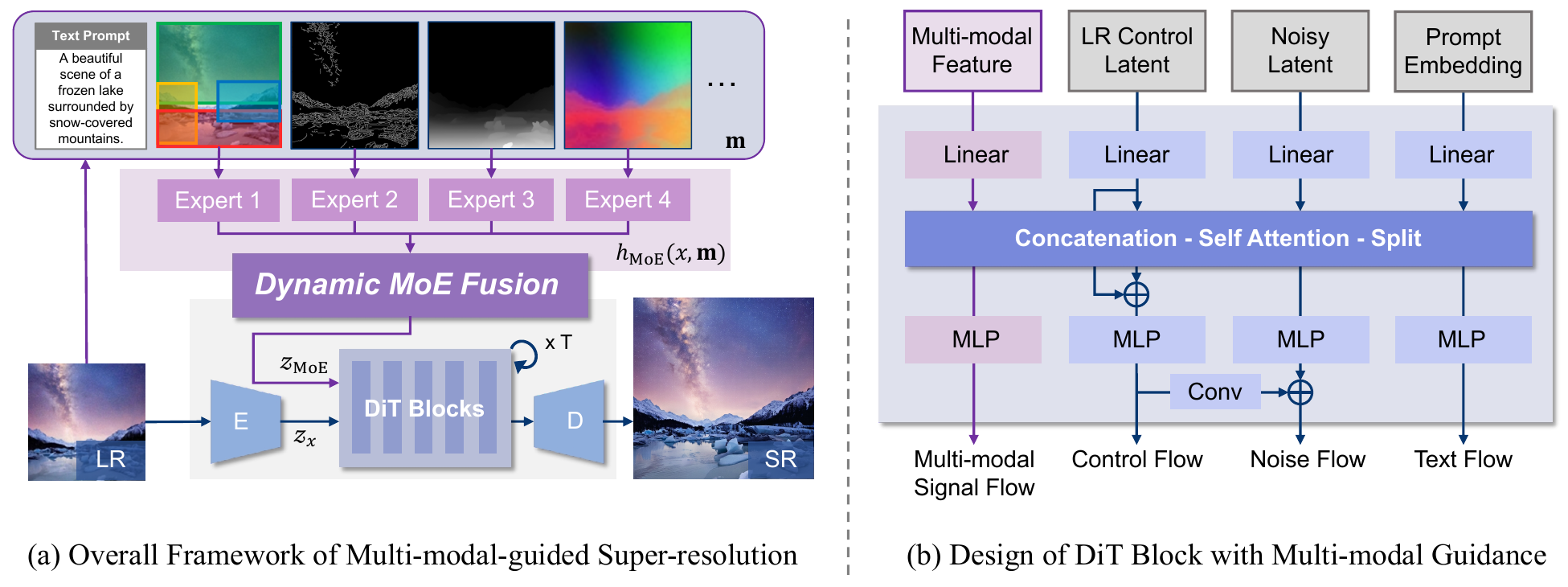}
    \caption{The Multi-Modal Mixture-of-Experts Super-Resolution framework. (a) Multi-modal guidance signals are extracted from the LR input and fused into a compact representation via dynamic MoE fusion to guide the diffusion-based SR process. (b) \jy{Compact multi-modal representation guidance is injected through DiT self-attention.}}
    \label{fig:framework}
    \vspace{-5mm}
\end{figure}

\subsection{Compact Multi-Modal Representation Injection}

Based on the above understanding that a dynamic fusion Mixture-of-Experts design can effectively bound the generalization risk, we propose a Multi-Modal Mixture-of-Experts Super-Resolution (M$^3$ESR) framework. 
%
The core of this framework lies in \wh{a compact multi-modal representation extraction and injection mechanism inspired by the above analysis,} built upon the MoE paradigm, \wh{through which we design an adaptive fusion strategy that} enables efficient and controlled utilization of heterogeneous guidance signals.

Specifically, beyond the original text prompt used in semantic-guided SR models, we introduce four additional guidance modalities: semantic segmentation, depth, edge, and DINO features.
%
We fuse these modality-specific guidance into a unified and compact multi-modal feature by a MoE-based dynamic fusion module.
The fused multi-modal feature is concatenated with the noisy latent, LR latent, and text embeddings, then jointly processed through self-attention for semantic-aware denoising. Under a Diffusion Transformer (DiT) backbone, the attention outputs are propagated across DiT blocks to form an information flow, and the final noise prediction corresponds to a single diffusion denoising step, which is decoded to obtain the final HR image.
Detailed design of our modality feature injection approach is depicted in Fig.~\ref{fig:framework}.

This compact modality feature design aligns naturally with our generalization risk minimization objective for multi-modal-guided SR, \wh{bringing three advantages}:
First, compared with direct attention-based fusion between multi-modal features and LR latent, the compact design \wh{extracts more intrinsic modality information while removing redundant and potentially conflicting signals and making the guidance more lightweight}.
Second, by aggregating homogeneous guidance signals, the compact feature extraction process \wh{filters out stochastic variations and sample-specific noise}, thereby lowering model complexity and improving generalization.
Finally, the compact modality feature acts solely as an informative initialization for the DiT backbone, \wh{decoupling modality utilization from subsequent backbone learning}, avoiding repeated modality interaction modeling across layers and \wh{enabling more efficient modality utilization}.

\subsection{{Risk-Aware Dynamic Modality Weighting}}



We design a dynamic fusion mechanism that adaptively aligns modality weights with the marginal gains introduced by different guidance signals. 
However, explicitly estimating modality-specific contributions in SR is challenging due to \wh{(1) unobservable modality contributions} and \wh{(2) spatially varying guidance quality}.

We observe that modality contribution strongly correlates with \wh{local texture characteristics} of the target HR image.
\wh{Smooth regions favor structural cues (\textit{e.g.}, depth), while texture-rich regions benefit more from contour or edge guidance.}
Such patterns \wh{and their guidance preferences} usually can be captured by an uncertainty map $\textbf{U}(x)$ computed from the LR images $x$.
%
%

\begin{figure}[t]
    \centering
    \includegraphics[width=0.85\linewidth]{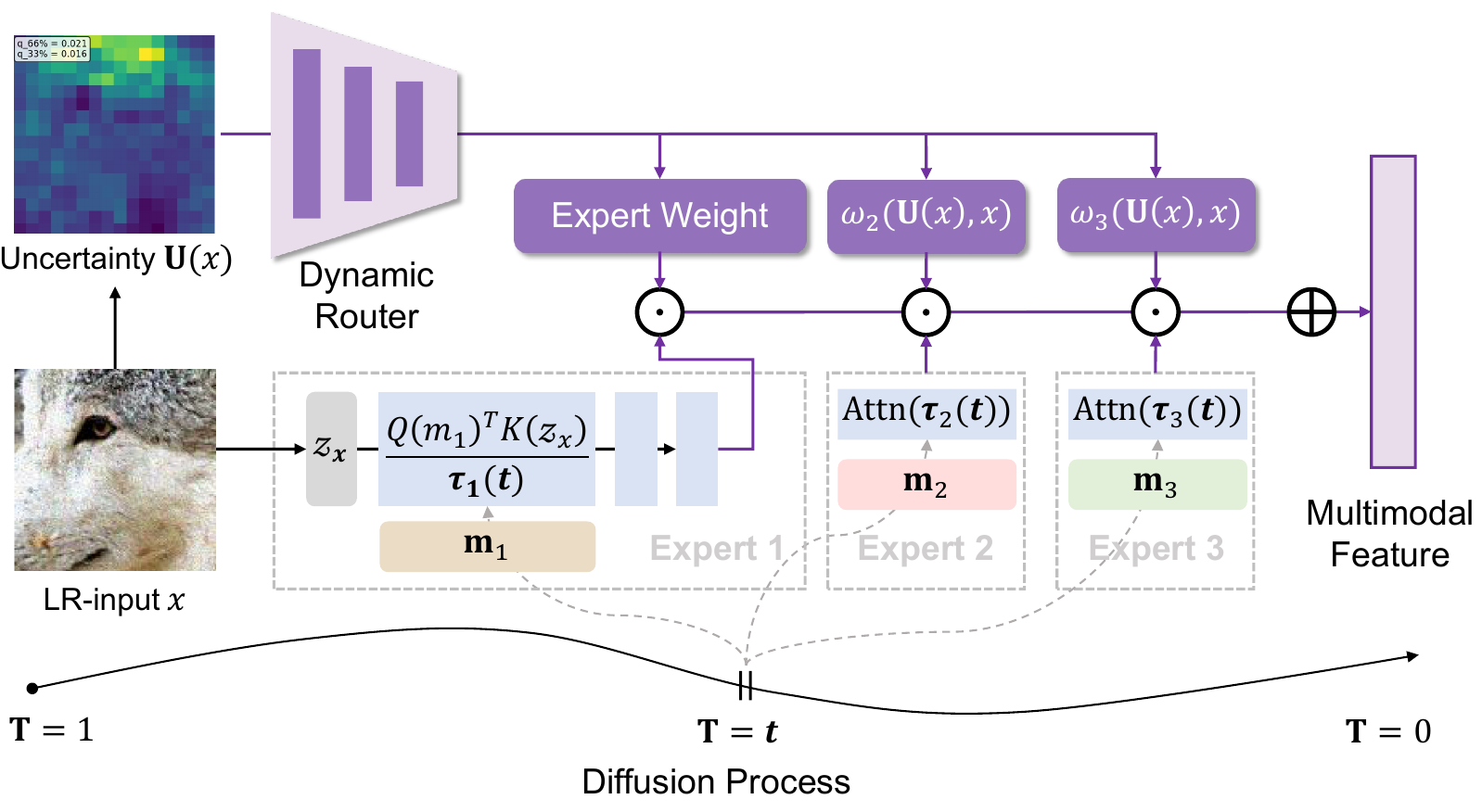}
    \vspace{-2mm}
    \caption{The architecture of our dynamic fusion module. For spatial modality contribution enhancement, we leverage uncertainty maps as cues for patch-wise expert weight assignment via a learnable dynamic router. For temporal modality risk reduction, we adopt timestep-aware attention temperature to adjust modality guidance signals.}
    \label{fig:fusion_module}
    \vspace{-6mm}
\end{figure}

Based on such observations, we design a {risk-aware dynamic modality weighting} mechanism by utilizing a region-specific modality router conditioned on the uncertainty map to leverage the local contribution and risk introduced by different modality guidance signals for each expert.
To be specific, as shown in Fig.~\ref{fig:fusion_module}, 
each modality expert $E_m$ is a lightweight cross-attention–based module operating on the LR latent $z_x$ and modality latent $\textbf{m}_m$. We first calculate the uncertainty map $\textbf{U}(x) \in \mathbb{R}^{N_p}$ based on the difference between the LR input $x$ and a coarse SR result given by a generative model RealESRGAN~\cite{realesrgan}:
\begin{equation}
    \textbf{U}(x) = ||\tilde{f}(x) - \text{Up-Scale}(x)||_1,
\end{equation}
where $\tilde{f}(x)$ is the coarse SR model.
%
We train a region-specific router based on the acquired uncertainty values, which is a Transformer-based network that accepts an LR image and uncertainty map as input, and outputs region-specific modality weights $w^m_\theta(\textbf{U}(x),x) \in [0,1]^{N_p} $. These weights are used to perform element-wise multiplication on the output features of each expert:
\begin{equation}
    h_{\text{MoE}}(x, \mathbf{m}) = z_x + \sum_{m=1}^{M} w^m_\theta(\textbf{U}(x),x) \odot E_m(z_{x}, m_m).
\end{equation}

This uncertainty-based dynamic weighting mechanism effectively enhances the correlation between modality weights and their marginal gains within the MoE framework.
On the one hand, the intrinsic correlation among the uncertainty map, target texture patterns, and modality contributions enables the uncertainty-based router to suppress high-risk or less informative modalities, while allocating higher weights to modalities with significant positive contributions.
On the other hand, the region-aware weighting design allows for \wh{pixel-level} spatial \wh{adjustment}, ensuring that the covariance
$\mathrm{Cov}\!\left(w_i^m, \Delta_i^m(x, y)\right)$
is optimized at the patch level.
\vspace{-3mm}


\subsection{Timestep-Aware Modality Temperature Scheduling}


Beyond spatially varying modality gains, we observe that modality contributions also evolve across diffusion timesteps. Early stages emphasize global structure and semantic reconstruction, \wh{relying more on} high-level guidance (\jy{\emph{e.g.}}, segmentation), whereas later stages focus on fine-grained texture recovery, where edge or depth cues become more beneficial.

Motivated by this observation, we introduce a timestep-aware modality temperature scheduling mechanism that adaptively adjusts the attention temperature of each modality expert. At each timestep, the temperature is designed to be modulated in accordance with the modality contribution: modalities providing reliable guidance are assigned lower temperatures to yield sharper, more concentrated attention distributions, while less reliable modalities receive higher temperatures to produce smoother attention and mitigate potential risk.

The detailed design of this approach is illustrated in Fig.~\ref{fig:fusion_module}. Specifically, we adopt learnable timestep-aware modality-specific temperature curves parameterized by $\alpha_m(t)$ and $\beta_m(t)$, which are learnable time embedding functions of timestep $t$. These curves compute the temperature $\mathbf{\tau}_m(t)$ for each expert at different timesteps:

\begin{equation}
    \begin{aligned}
        &\text{Cross-Attn}(z_x,m_m;t) = \text{SoftMax}\left( \frac{Q(m_m)K(z_x)}{\mathbf{\tau}_m(t)} \right)V(z_x),\\
        &\mathbf{\tau}_m(t) = t(\alpha_m(t) e^{-\beta_m(t)}) + (1-t)(1-\alpha_m(t) e^{-\beta_m(t)}).
    \end{aligned}
\end{equation}

This mechanism is \wh{further augmented into a temporal dynamic version of} the modality weighting strategy.
It enables time-dependent adjustment of modality-specific guidance representations along the diffusion process, making each modality better aligned with the SR objective at the current timestep.

\section{Experiments}

\subsection{Experimental Settings}
\noindent \textbf{Datasets.} In our experiments, we follow the setting in SeeSR [34], training on a combination of LSDIR~\cite{lsdir} and the first 10k FFHQ~\cite{ffhq} images. All images are randomly cropped to $512 \times 512$ patches and downsampled $4\times$ via the Real-ESRGAN~\cite{realesrgan} degradation pipeline.  
%
We evaluate methods on DIV2K-val~\cite{div2k}(synthetic) and RealLQ250~\cite{reallq250} datasets.


\noindent \textbf{Evaluation Metrics.} We adopt 6 evaluation metrics in total, consisting of 1 full-reference (FR) metrics including  LPIPS~\cite{lpips}, with 5 no-reference (NR) metrics including LIQE~\cite{liqe},  WaDIQaM-NR~\cite{wadiqam}, MUSIQ~\cite{musiq}, CLIPIQA~\cite{clipiqa} and HyperIQA~\cite{hyperiqa} for quantitative evaluation. 

\noindent \textbf{Implementation Details.} 
%
We build our multi-modal guided super-resolution framework upon DiT4SR~\cite{dit4sr} with a Stable-Diffusion (SD) 3.5 backbone~\cite{stablediffusion3}, which serves as the primary quality-oriented model in our experiments. For comparison, we additionally implement a fidelity-driven variant on SeeSR~\cite{seesr} with a SD 2.1 backbone.
For modality guidance estimation, the text prompts are generated by LLAVA~\cite{llava} and embedded with CLIP~\cite{clip}.  Semantic segmentation maps are given by Mask2Former~\cite{mask2former}, depth maps are estimated by DepthAnything-V2~\cite{depth_anything_v2}, edge maps are calculated using Canny edge detector~\cite{cannyedge}, and DINO features are calculated using DINO-V3~\cite{dinov3}. The former three modalities are encoded by a VQGAN~\cite{vqgan} encoder.
During inference, we use 40-step diffusion to generate the SR result. During training, we train our fusion model for 250k iterations with batch size 16, starting from a pretrained text-guided SR checkpoint.




\begin{table}[t]
    \centering
     \scriptsize
     \renewcommand{\arraystretch}{1.35}
    \caption{Quantitative comparison results on DIV2K~\cite{div2k} dataset. Best and second-best results are in \textbf{bold} and \underline{underlined}.}
    \label{tab:metric_div2k}
        \begin{tabularx}{\linewidth}{
        c | c |
        >{\centering\arraybackslash}X
        >{\centering\arraybackslash}X
        >{\centering\arraybackslash}X
        >{\centering\arraybackslash}X
        >{\centering\arraybackslash}X
        >{\centering\arraybackslash}X
    }
\hline
\multicolumn{2}{c|}{\textbf{Method}}  & LPIPS$\downarrow$ & LIQE$\uparrow$ & WaDIQam$\uparrow$ & MUSIQ$\uparrow$ & CLIPIQA$\uparrow$  & HyperIQA$\uparrow$ \\ 
\hline
\multirow{2}{*}{\makecell{\textit{GAN-}\\ \textit{based}}}  
&BSRGAN~\cite{bsrgan}  & 0.3192 & {4.0155} & -0.1794 & 66.98 & 0.5924 &  0.6132\\
&R-ESRGAN~\cite{realesrgan}  & 0.3203 & 4.1617 & -0.1634  & 68.47 & 0.6529 & 0.5991 \\
\hline
\multirow{7}{*}{\makecell{\textit{Diffusion-}\\ \textit{based}}}  
&SeeSR~\cite{seesr}  & 0.2751 & \underline{4.6037} & 0.0609  & 72.24 & 0.7361 & \underline{0.7073} \\
&DiffBIR~\cite{diffbir}  & 0.3483 & 4.5559 & -0.0098 & 72.44 & 0.7368 & 0.6697 \\
&PiSA-SR~\cite{pisasr}  & \textbf{0.2479} & 4.5994 & 0.0397 & \underline{73.23} & 0.7336 & 0.7001\\
&InvSR~\cite{invsr} & 0.3339 & 4.0425 & 0.0208 & 71.86 & 0.7356  & 0.6629 \\
&DiT4SR~\cite{dit4sr}  & 0.3132 & 4.5917 & 0.0152 & 73.05 & 0.7176 & 0.6953 \\
\cline{2-8}
&Ours-F    & \underline{0.2652} & 4.5682 & \underline{0.0810} & 72.02 & \underline{0.7396} & 0.7064 \\
&Ours-Q   & 0.2729 & \textbf{4.6899} & \textbf{0.0895} & \textbf{73.29} & \textbf{0.7610} & \textbf{0.7153} \\
\hline
\end{tabularx}
\end{table}

\subsection{Quantitative Comparison Analysis}
We compare our M$^3$ESR method with two GAN-based methods including BSRGAN~\cite{bsrgan} and Real-ESRGAN~\cite{realesrgan}, and five state-of-the-art Diffusion-based methods, including SeeSR~\cite{seesr}, PiSA-SR~\cite{pisasr}, InvSR~\cite{invsr}, DiffBIR~\cite{diffbir} and DiT4SR~\cite{dit4sr}. 
Given the strong data distribution fitting capability of diffusion-based methods, we further fine-tune the released checkpoints of InvSR, DiT4SR, and DiffBIR for the same number of epochs to ensure a fair comparison despite differences in their original training settings.
\jy{We do not compare with MMSR~\cite{mmsr} as its code is not publicly available and its reported results use a different training setting.}
%

As shown in Tab.~\ref{tab:metric_div2k}, the fidelity-oriented variant (Ours-F) achieves FR metrics comparable to state-of-the-art diffusion methods, while the quality-oriented variant (Ours-Q) outperforms other SR methods on NR metrics, including method like DiT4SR that uses the same SD 3.5 backbone.
%
%
%
It is worth noting that existing image quality assessment metrics struggle to capture the semantic consistency of fine-grained details. Smooth SR textures may achieve favorable PSNR and LPIPS scores yet appear perceptually inconsistent to humans, while semantically plausible fine-grained textures often deviate from the ground truth at the pixel level, resulting in lower FR metric scores.

\begin{figure}[t!]
    \centering
    \includegraphics[width=\linewidth]{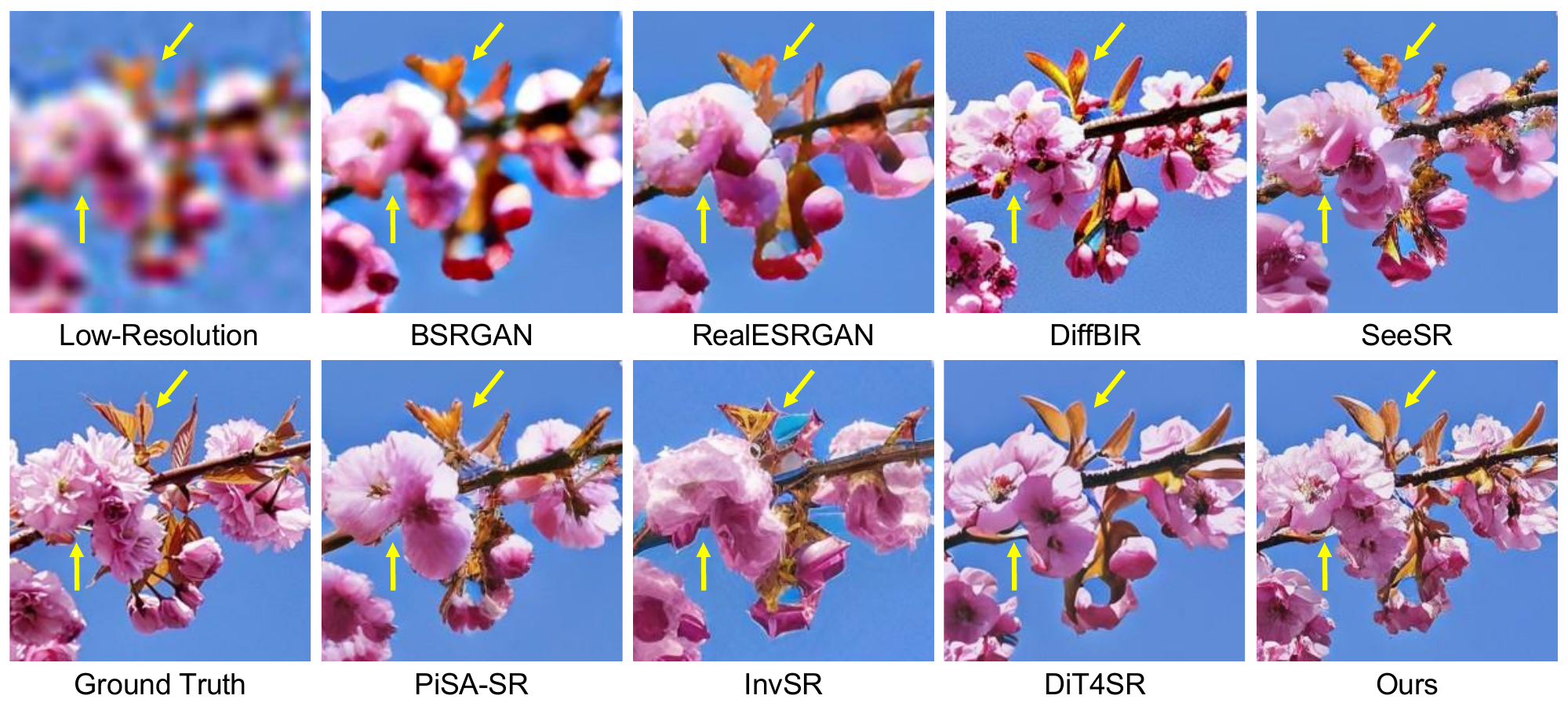}
    \caption{Visualization results of our M$^3$ESR method compared with other approaches on DIV2K~\cite{div2k} dataset.}
    \label{fig:vis_div2k}
\end{figure}

\begin{figure}[t!]
    \centering
    \includegraphics[width=\linewidth]{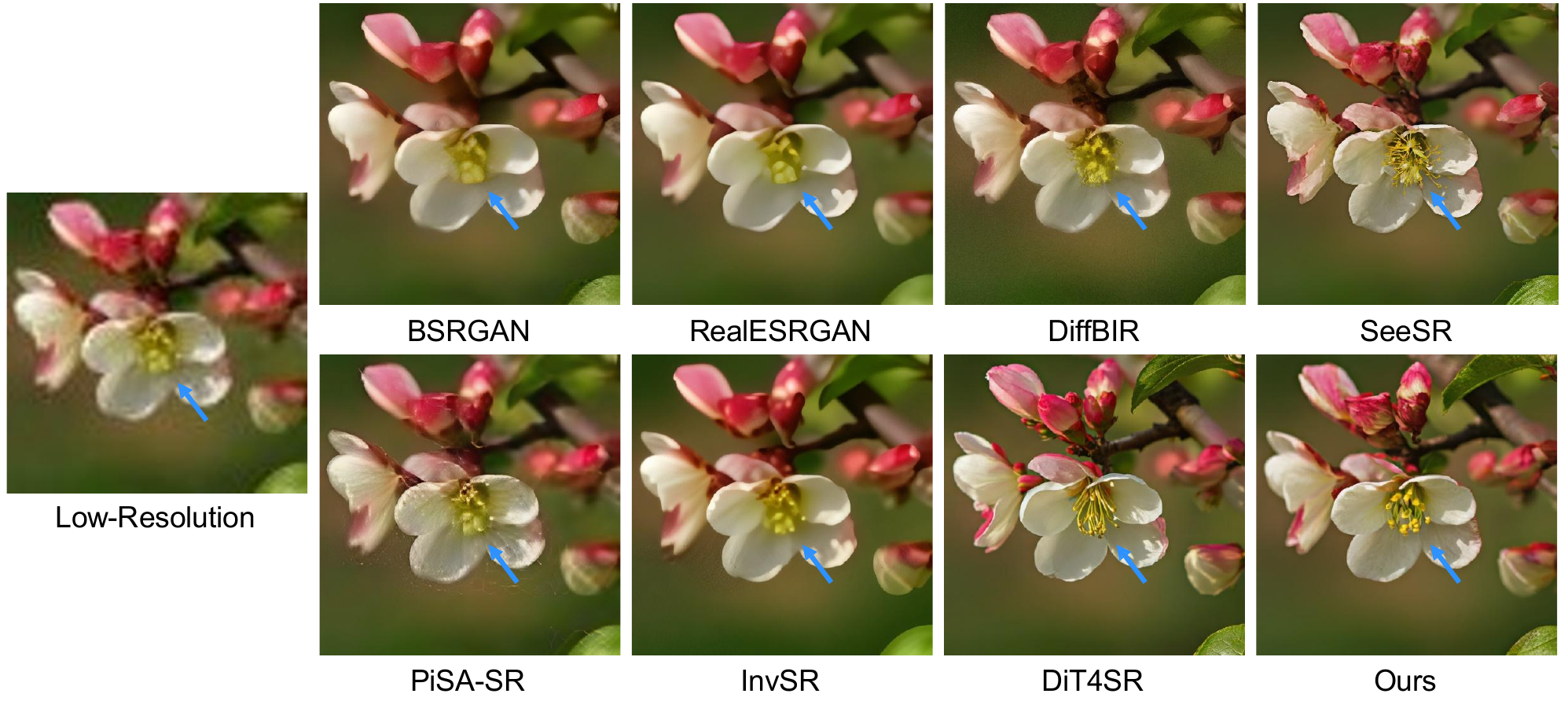}
    \caption{Visualization results of our M$^3$ESR method compared with other approaches on RealLQ250~\cite{reallq250} dataset.}
    \label{fig:vis_reallq}
\end{figure}

\begin{table}[t!]
    \centering
    \scriptsize
    \renewcommand{\arraystretch}{1.35}
    \caption{User study results: Best-selection percentage across three aspects.}
    \begin{tabularx}{\textwidth}{
        l |
        >{\centering\arraybackslash}X
        >{\centering\arraybackslash}X
        >{\centering\arraybackslash}X
        >{\centering\arraybackslash}X
    }
    \hline
        \textbf{Method} & SeeSR~\cite{seesr} & PiSA-SR~\cite{pisasr} & DiT4SR~\cite{dit4sr} & Ours-Q \\
    \hline
        Semantic consistency & 11.39\% & 18.33\% & 13.61\% & \textbf{56.67}\% \\
        Perceptual quality & 20.83\% & 11.94\% & 13.33\% & \textbf{53.89}\% \\
        Pixel fidelity & 9.72\% & 23.61\% & 6.94\% & \textbf{59.72}\% \\
        \hline
    \end{tabularx}
    \label{tab:user_study}
\end{table}

Hence, in order to better evaluate semantic restoration performance from a human perspective, we conducted a user study on DIV2K~\cite{div2k} dataset and no-reference RealLQ250~\cite{reallq250} dataset. The evaluation was carried out from three aspects: LR-SR semantic consistency, subjective perceptual quality, and pixel-level fidelity(only on DIV2K). For each case, every participant was asked to select the best-performing result for each evaluation aspect. A total of 24  participant responses were collected across 15 cases. The percentages of selections as the best-performing method, reported in Tab.~\ref{tab:user_study}, indicate that our approach achieves favorable performance under human perception.

\subsection{Qualitative Comparison Analysis}
\label{subsec:visualize_results}

We present qualitative comparisons on the DIV2K dataset in Fig.~\ref{fig:vis_div2k} and on RealLQ250~\cite{reallq250} dataset in Fig.~\ref{fig:vis_reallq} .
Compared with other methods, our M$^3$ESR approach produces more delicate details and achieves superior perceptual quality, while maintaining faithful semantic consistency. For example, in the first case, our result produces richer petal textures while preserving the semantic correctness of the leaves and flowers, indicating that the proposed multi-modal guidance steers generative detail synthesis toward semantically faithful reconstruction.
%

\subsection{Ablation Studies}

\begin{table*}[t]
    \centering
    \scriptsize
    \renewcommand{\arraystretch}{1.35}
    \caption{Ablation results on DIV2K dataset.}
    
    \begin{subtable}[t]{0.48\linewidth}
        \centering
        \caption{Contribution of each modality.}
        \begin{tabularx}{\linewidth}{
            >{\raggedright\arraybackslash}X |
            >{\centering\arraybackslash}X
            >{\centering\arraybackslash}X
            >{\centering\arraybackslash}X
        }
        \hline
        Modalities & LPIPS$\downarrow$ & LIQE$\uparrow$ & MUSIQ$\uparrow$ \\
        \hline
        None      & 0.3132 & 4.5917 & 73.05 \\
        w/o seg.  & 0.2796 & 4.6118 & 72.54 \\
        w/o depth & 0.2931 & 4.6336 & 73.08 \\
        w/o edge  & \underline{0.2793} & 4.6809 & \underline{73.18} \\
        w/o DINO  & 0.2797 & \textbf{4.7251} & 73.07 \\
        ALL       & \textbf{0.2729} & \underline{4.6899} & \textbf{73.29} \\
        \hline
        \end{tabularx}
        \label{tab:ablation_modality}
    \end{subtable}
    \hfill
    \begin{subtable}[t]{0.48\linewidth}
        \centering
        \caption{MoE fusion design.}
        \begin{tabularx}{\linewidth}{
            c c |
            >{\centering\arraybackslash}X
            >{\centering\arraybackslash}X
            >{\centering\arraybackslash}X
        }
        \hline
        Dynamic & Temp. & \multirow{2}{*}{LPIPS$\downarrow$} &
\multirow{2}{*}{LIQE$\uparrow$} &
\multirow{2}{*}{MUSIQ$\uparrow$} \\
        Routing & Scheduling & & & \\
        \hline
        $\checkmark$ & $\times$      & 0.2906 & 4.5942 & 72.33 \\
        $\times$     & $\checkmark$ & 0.2780 & 4.6347 & 72.77 \\
        $\checkmark$ & $\checkmark$ & \textbf{0.2729} & \textbf{4.6899} & \textbf{73.29} \\
        \hline
        \end{tabularx}
        \label{tab:ablation_module}
    \end{subtable}

\end{table*}

We validate the effectiveness of our multi-modal mixture-of-experts design through thorough ablation studies.
Specifically, we conduct evaluations from two complementary perspectives: (i) modality contribution, which examines the marginal performance contribution of each guidance modality, and (ii) the Mixture-of-Experts module effectiveness, which analyzes the impact of different modules in the expert feature extracting and routing mechanism.
\vspace{2mm}

\noindent \textbf{Modality Contribution}. Excluding the textual modality, we incorporate four guidance modalities: semantic segmentation maps, edges, depth maps, and DINO features.
We perform ablation studies by removing each modality during training to examine its marginal contributions to the SR results.
As shown in Tab.~\ref{tab:ablation_modality}, removing any individual modality degrades the FR metric, represented by LPIPS, and the overall performance drops significantly when no modality is used. This indicates that the proposed dynamic fusion effectively leverages complementary modality information despite the inaccuracy in single modality guidance.
\vspace{2mm}

\noindent \textbf{MoE Module Effectiveness}. We remove the uncertainty-guided dynamic expert routing module and the timestep-aware expert temperature scheduling mechanism from our MoE framework. As shown in Tab.~\ref{tab:ablation_module}, disabling either component leads to a noticeable drop in overall performance, while combining both modules yields better performance by jointly adjusting expert weights and features. These results demonstrate the effectiveness of our framework design.



\section{Conclusion}
In this work, we propose a multi-modality-guided Super-resolution framework which dynamically utilizes multi-modal information for SR from a generalization risk-minimization perspective. By introducing risk-aware dynamic modality weighting under a mixture-of-experts formulation, together with adaptive modality temperature scheduling, our method jointly reduces the expected generalization loss. Extensive experiments demonstrate the effectiveness of the proposed framework and suggest a possible direction for calibrating the  contributions of guidance modalities in broader low-level vision tasks.


%
%
\bibliographystyle{splncs04}
\bibliography{main}
\end{document}